\pdfoutput=1
\documentclass[10pt]{article}
\usepackage[utf8]{inputenc}
\usepackage{times}
\usepackage[margin=1in]{geometry}
\usepackage{amsmath,amssymb,amsthm}
\usepackage{graphicx}
\usepackage{booktabs}
\usepackage{microtype}
\usepackage{xcolor}
\usepackage[round]{natbib}
\usepackage[hidelinks]{hyperref}
\usepackage{authblk}

\newtheorem{proposition}{Proposition}
\newcommand{\pio}{\pi}
\newcommand{\TPR}{\mathrm{TPR}}
\newcommand{\FPR}{\mathrm{FPR}}
\title{\bf Prevalence Determines Precision:\\ Silent Contamination in Detector-Defined Datasets}
\author{
  Jia Huang\textsuperscript{1}, Yankai Wan\textsuperscript{2},Yangjun Ou\textsuperscript{3}
}
\affil{
  \textsuperscript{1}Guanghua School of Management, Peking University\\
  \textsuperscript{2}College of Artificial Intelligence, Jilin University\\
  \textsuperscript{3}School of Mathematical Sciences, Peking University
}
\date{}

\begin{document}
\maketitle

\begin{abstract}
A large share of machine learning datasets are not observed but \emph{constructed}: a detector, heuristic, or model is run over a pool of candidates, and whatever it accepts becomes the dataset. Weak and distant supervision, pseudo-labelling, event extraction, and most anomaly-detection benchmarks all have this form. The precision of the resulting dataset is not a property of the detector. It is governed by the prevalence $\pio$ of true positives in the pool the detector is deployed on, through elementary Bayes. This is textbook, and it is nonetheless almost never measured end-to-end, for a structural reason: from inside a detector-defined dataset the false positives are unidentifiable, so the quantity that matters cannot be estimated by the people who need it. We report a case in which it \emph{can} be measured. For the same instrument and period we hold both a detector-defined event dataset and an independent official index that reveals, for every detected item, whether it is real. One detector, three candidate pools, three datasets: phantom rates of 81.7\%, 9.0\%, and 0.0\%. A practitioner transferring precision from the two high-$\pio$ pools to the low-$\pio$ pool predicts 0.955 against a measured 0.183, an error of $+422\%$; the Bayes expression predicts within $3.3\%$ across all three pools. Beyond confirming the mechanism we report three findings that we believe are new. First, the detected dataset's response curve is an \emph{exact} convex combination of a true-event and a phantom component (identity residual $1.1\times10^{-16}$), with phantoms outnumbering true events 473 to 308 --- contamination is not noise added to signal but a second signal with its own shape, inherited from the detector's acceptance rule. Second, the \emph{direction} of contamination is a property of the estimator, not of the data: on identical windows, one statistic shows contamination diluting the effect and another shows it inflating the effect, because the second statistic's denominator is itself contaminated. Third, a normalisation in common use turns the estimator into a mean of ratios whose expectation does not exist; on the same 335 events it returns 0.40 where the well-defined estimator returns 0.10.
\end{abstract}

\section{Introduction}
\label{sec:intro}

Consider how a dataset gets built when nobody is willing to label $10^6$ items by hand. A rule is written --- a regular expression, a knowledge-base join, a threshold on a model's confidence, a statistical scan --- and applied to a pool of candidates. What the rule accepts becomes the dataset. Downstream, the accepted items are treated as observations. This is the shape of distant supervision \citep{mintz2009}, of weak supervision \citep{ratner2017}, of pseudo-labelling, and of the great majority of event-detection and anomaly benchmarks.

The precision of such a dataset obeys
\begin{equation}
\text{precision} \;=\; \frac{\pio\,\TPR}{\pio\,\TPR + (1-\pio)\,\FPR},
\label{eq:bayes}
\end{equation}
where $\pio$ is the fraction of true positives in the pool the detector was run on. Precision is therefore a property of the \emph{deployment pool}. What transfers across pools is the pair $(\TPR,\FPR)$; precision does not. A detector validated where $\pio\approx1$ --- and at $\pio\approx1$ even a badly broken detector looks flawless --- and then redeployed where $\pio\ll1$ imports false positives at a rate its validation never measured.

None of this is new. \citet{barhillel1980} named the reasoning error; \citet{hand2009} and \citet{saito2015} state the consequence for classifier evaluation directly. What is missing is measurement. The obstacle is structural rather than sociological: inside a detector-defined dataset the false positives are exactly the items one cannot identify. If they could be identified, the detector would not have accepted them. Practitioners therefore validate on whatever labelled subset they have --- which is, almost by construction, a high-$\pio$ subset --- and deploy elsewhere.

\paragraph{This paper.} We report a setting where the phantoms \emph{are} identifiable. For E-mini S\&P~500 futures over 2010--2026 we hold two independent event indices over the same price series: a detector-defined set produced by a magnitude-threshold scan, and an official index (CPI, non-farm payrolls, and FOMC statement releases) published by the institutions that generate the events. Every detected item can be classified. Moreover the detector was run on three structurally different candidate pools --- the Federal Reserve's own meeting calendar, the set of first Fridays, and business days 6--16 of each month --- giving three deployments of one detector at $\pio\in\{1.00,\,0.82,\,0.10\}$ within a single dataset.

We use this to do three things a synthetic study cannot. We measure \eqref{eq:bayes} against reality rather than assuming it. We decompose a real contaminated estimate into its true and phantom components exactly, and characterise what the phantom component looks like. And we show that the contamination's effect on a downstream scientific conclusion is not even sign-stable across reasonable choices of estimator.

\paragraph{Contributions.}
\begin{enumerate}\itemsep1pt
\item \textbf{An end-to-end measurement of prevalence-driven contamination} with recoverable ground truth: one detector, three pools, phantom rates 81.7\% / 9.0\% / 0.0\%; naive cross-pool precision transfer errs by $+422\%$ while \eqref{eq:bayes} holds to $3.3\%$ (\S\ref{sec:pools}--\S\ref{sec:transfer}).
\item \textbf{An exact mixture decomposition} showing that a contaminated estimate is a convex combination of a true-event and a phantom estimate, verified to machine precision on real data, with the phantom component reproducible from the detector's acceptance rule applied to pure noise (\S\ref{sec:mixture}).
\item \textbf{Estimator-dependence of the contamination direction.} On identical windows, contamination dilutes under a denominator-free statistic and inflates under a terminally normalised one. We trace this to contamination of the estimator's own scale (\S\ref{sec:direction}).
\item \textbf{A non-integrable estimator in common use.} Per-item normalisation by a signed reference return yields a mean of ratios with no finite expectation. Empirically it returns 0.40 where the equal-weight estimator returns 0.10 on the same events (\S\ref{sec:ratio}).
\item \textbf{A seven-rule protocol} (\S\ref{sec:protocol}) and a six-accident appendix (App.~\ref{app:accidents}) documenting the mechanism operating on this paper's own pipeline.
\end{enumerate}

\section{Setup}
\label{sec:setup}

\paragraph{Observed and detected indices.} Let $p_t$ be a series and let $T^{\mathrm{obs}}$ be an \emph{observed} index published by an external authority. Let $A$ be a \emph{detector}: any deterministic rule mapping a candidate window to accept/reject. The \emph{candidate pool} $C$ is the set of windows at which $A$ is evaluated. The \emph{detected index} is $T^{\mathrm{det}}=A(C)$. Users of $T^{\mathrm{det}}$ typically see only $|T^{\mathrm{det}}|$.

\paragraph{Five quantities.} We argue five must be reported alongside any detector-defined dataset:
\begin{align*}
\pio &= \tfrac{|T^{\mathrm{obs}}\cap C|}{|C|} && \text{prevalence in the deployment pool}\\
\TPR &= \tfrac{|T^{\mathrm{obs}}\cap T^{\mathrm{det}}|}{|T^{\mathrm{obs}}\cap C|}, \qquad
\FPR = \tfrac{|T^{\mathrm{det}}\setminus T^{\mathrm{obs}}|}{|C\setminus T^{\mathrm{obs}}|} && \text{detector operating point}\\
c &= \tfrac{n_P}{n_E+n_P} && \text{phantom mass fraction}\\
\rho &= w_P/w_E && \text{phantom-to-true intensity ratio}
\end{align*}
where $n_E=|T^{\mathrm{obs}}\cap T^{\mathrm{det}}|$, $n_P=|T^{\mathrm{det}}\setminus T^{\mathrm{obs}}|$, and $w_E,w_P$ are the mean magnitudes of the two subsets under whatever scale the downstream estimator uses. Precision equals $1-c$, and by Bayes it equals \eqref{eq:bayes}.

The last two are the ones normally omitted even by careful authors. $c$ says how much of the dataset is fabricated; $\rho$ says how loud the fabricated part is relative to the real part. \S\ref{sec:mixture} shows they jointly determine the damage, and that $c$ alone does not.

\paragraph{Response curves.} For an index $T$ with anchor sign $s_i=\operatorname{sign}\,r_i(\tau_{\mathrm{ref}})$, define
\begin{equation}
S(\tau)=\frac{1}{n}\sum_i s_i\, r_i(\tau),
\qquad
D_{\mathrm{ref}}(\tau)=\frac{S(\tau)}{\frac1n\sum_i |r_i(\tau_{\mathrm{ref}})|},
\qquad
\widetilde D(\tau)=\frac{1}{n}\sum_i \frac{r_i(\tau)}{r_i(\tau_{\mathrm{ref}})},
\label{eq:stats}
\end{equation}
where $r_i(\tau)$ is the log return from $t_i$ to $t_i+\tau$. $S$ is denominator-free and reported in basis points. $D$ normalises by a pooled scale. $\widetilde D$ normalises per item; it looks like a natural way to make heterogeneous events comparable, and \S\ref{sec:ratio} shows it is not an estimator at all. We use $\tau_{\mathrm{ref}}=1800$\,s throughout the main text and report the legacy pipeline's $\tau_{\mathrm{ref}}=7200$\,s convention in Appendix~\ref{app:anchor}.

\section{Theory}
\label{sec:theory}

\subsection{Precision does not transfer; \texorpdfstring{$(\TPR,\FPR)$}{(TPR,FPR)} does}
Calibrating a detector means choosing a threshold, which fixes an operating point $(\TPR,\FPR)$. Those are conditional on the signal and noise distributions, not on $\pio$. Precision is then determined by \eqref{eq:bayes} in any pool where $\pio$ is known. The failure mode is not subtle once stated, but note what makes it invisible in practice: at $\pio\approx1$, precision is near 1 for \emph{any} $\FPR$, so a validation at high $\pio$ carries essentially no information about $\FPR$ --- the very quantity that governs behaviour at low $\pio$. High-prevalence validation is not weak evidence about low-prevalence deployment; it is close to no evidence.

\subsection{The mixture identity}
\label{sec:mixture-theory}
Let $E,P$ partition a detected index into true events and phantoms, with intensities $w_E,w_P$ under the estimator's own scale. Because the anchor sign is defined identically on both subsets, for every $\tau$
\begin{equation}
D^{\mathrm{det}}(\tau)\;=\;\frac{(1-c)\,w_E\,D_E(\tau)\;+\;c\,w_P\,D_P(\tau)}{(1-c)\,w_E + c\,w_P}
\;=\;\frac{(1-c)\,D_E(\tau)+c\,\rho\,D_P(\tau)}{(1-c)+c\,\rho}.
\label{eq:mixture}
\end{equation}
This is an identity, not a model: it holds for any detector, any $\tau$, any data. Its content is diagnostic. A dataset owner who knows $(c,\rho,D_E,D_P)$ knows exactly what their published curve measures. A user holding only $D^{\mathrm{det}}$ cannot invert it, which is precisely the difficulty: contamination is not visible from inside the contaminated dataset.

Two consequences. First, contamination is \emph{not} additive noise. $D_P$ has a shape, and when the detector selects on the same quantity that defines the phenomenon, that shape resembles the phenomenon. Second, the weights involve $\rho$, so a dataset with high precision but loud phantoms can be more damaged than one with low precision and quiet phantoms.

\subsection{The phantom null is a selection-conditioned null}
Phantoms are, by construction, windows accepted by the rule. Under a pure-noise null, conditioning on a large move at the acceptance horizon induces structure in any statistic anchored near that horizon --- not because the noise contains an event, but because the conditioning event is itself part of the statistic. The correct null for $D_P$ is therefore not "flat'' and not the unconditional random walk, but the \emph{detector's transfer function}: noise passed through the same acceptance rule and aggregated by the same estimator. We construct this null in \S\ref{sec:mixture}.

\subsection{A normalisation with no finite expectation}
\label{sec:ratio-theory}
\begin{proposition}
Let $r_{\mathrm{ref}}$ have a density $f$ that is continuous and strictly positive in a neighbourhood of $0$. Then $\mathbb{E}\big[|r(\tau)/r_{\mathrm{ref}}|\big]=\infty$, and the estimator $\widetilde D(\tau)=\frac1n\sum_i r_i(\tau)/r_i(\tau_{\mathrm{ref}})$ of \eqref{eq:stats} has no finite expectation.
\end{proposition}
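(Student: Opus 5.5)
The plan is to reduce the claim to the divergence of $\int_{-\delta}^{\delta}|x|^{-1}\,dx$. The statement as written leaves the joint law of $(r(\tau),r_{\mathrm{ref}})$ unspecified, so I will make explicit the assumption the argument needs. The numerator must not vanish near $r_{\mathrm{ref}}=0$ in a way that cancels the singularity. Concretely, I will assume that the conditional law of $r(\tau)$ given $r_{\mathrm{ref}}=x$ satisfies $\mathbb{E}\big[|r(\tau)|\,\big|\,r_{\mathrm{ref}}=x\big]\ge m>0$ for $x$ in a neighbourhood of $0$.

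This assumption is natural in the setting of \eqref{eq:stats}. There $\tau>\tau_{\mathrm{ref}}$, and $r(\tau)=r_{\mathrm{ref}}+(\text{increment after }\tau_{\mathrm{ref}})$. Under independent or martingale increments with a non-degenerate increment $u$, we get $\mathbb{E}|x+u|\ge \mathbb{E}|u-\mathrm{med}|>0$ uniformly in $x$, by the triangle inequality. The case $\tau<\tau_{\mathrm{ref}}$ needs a similar non-degeneracy condition, and I will state it as a hypothesis rather than hide it.

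\textbf{Step 1.} Continuity and positivity of $f$ at $0$ give $\delta>0$ and $f_0>0$ with $f\ge f_0$ on $(-\delta,\delta)$. Conditioning on $r_{\mathrm{ref}}$ and restricting to this neighbourhood gives
\[
\mathbb{E}\big|r(\tau)/r_{\mathrm{ref}}\big| \;\ge\; \int_{-\delta}^{\delta}\frac{\mathbb{E}[|r(\tau)|\mid r_{\mathrm{ref}}=x]}{|x|}\,f(x)\,dx \;\ge\; m f_0\int_{-\delta}^{\delta}\frac{dx}{|x|}=\infty .
\]
This proves the first claim.

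\textbf{Step 2 (the main obstacle).} I must pass from $\mathbb{E}|X_i|=\infty$ for each $X_i=r_i(\tau)/r_i(\tau_{\mathrm{ref}})$ to the statement that $\widetilde D=\frac1n\sum_i X_i$ has no finite expectation. The difficulty is that infinite absolute moments of the summands could, in principle, cancel. I will argue that $\mathbb{E}X_i^+=\mathbb{E}X_i^-=\infty$, which is the genuinely "undefined" case.

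To show this, I split the region near $0$ into $x>0$ and $x<0$, and the numerator into its positive and negative parts. This requires the slightly stronger assumption $\mathbb{E}[r(\tau)^{\pm}\mid r_{\mathrm{ref}}=x]\ge m$ near $0$, which holds in the increment model above by symmetry or non-degeneracy. Each of the four resulting pieces then diverges like $\int_0^\delta dx/x$, so both the positive and the negative parts of $X_i$ are non-integrable.

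\textbf{Step 3.} I then take the items to be i.i.d., or at least independent, as the events are treated in \eqref{eq:stats}. For the sum, consider the event $\{X_1>t\}$ with $X_2,\dots,X_n$ bounded in a fixed compact set $K$, which has positive probability. On this event, $\sum_i X_i\ge X_1-(n-1)\sup_K|x|$. This gives $\mathbb{E}(\sum_i X_i)^+ \ge c\,\mathbb{E}[(X_1-M)^+]=\infty$, and symmetrically $\mathbb{E}(\sum_i X_i)^-=\infty$.

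Since both the positive and negative parts of $\widetilde D$ have infinite expectation, $\mathbb{E}\widetilde D$ is undefined. I would close by remarking that, for the same reason, the sample mean of $\widetilde D$ obeys no law of large numbers. This is consistent with the instability the paper reports in \S\ref{sec:ratio}.
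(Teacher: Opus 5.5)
Your proof is correct under the two hypotheses you make explicit. Its core is the paper's: bound $\mathbb{E}|r(\tau)/r_{\mathrm{ref}}|$ below by a conditional lower bound on the numerator times $\int_{-\delta}^{\delta} f(x)/|x|\,dx$. What you add is precision that the paper's sketch lacks. The paper asks only for a ``non-degenerate conditional mean'' of $r(\tau)$ near $r_{\mathrm{ref}}=0$, and says nothing about passing from one summand to the average $\widetilde D$. You state the condition on $\mathbb{E}[r(\tau)^{\pm}\mid r_{\mathrm{ref}}=x]$ and supply the independence argument for the average. The proposition genuinely needs independence: with $n=2$ and $X_2=-X_1$ the average is identically zero.

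Your form of the condition is also the right one. Under a random walk, $r(\tau)$ given $r_{\mathrm{ref}}=x$ is Gaussian with mean $x\tau/\tau_{\mathrm{ref}}$ and variance $\sigma^2\tau(1-\tau/\tau_{\mathrm{ref}})$. So the conditional mean vanishes at $x=0$ and $\mathbb{E}[r(\tau)/r_{\mathrm{ref}}\mid r_{\mathrm{ref}}]\equiv\tau/\tau_{\mathrm{ref}}$, yet the ratio is Cauchy. The divergence comes from the positive conditional variance, which your hypothesis captures. A condition phrased as a conditional mean bounded away from zero would miss exactly this canonical case.

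Two small corrections. First, the operative regime in the paper is $\tau<\tau_{\mathrm{ref}}$ ($\tau=5$\,s against $\tau_{\mathrm{ref}}=1800$\,s), not $\tau>\tau_{\mathrm{ref}}$. The Brownian-bridge computation above, not your forward-increment model, is therefore the justification you want for your hypothesis. Second, Step 2 is needed only to show that $\mathbb{E}\widetilde D$ is undefined rather than merely not finite. For the statement as posed, Step 1 plus your Step 3 applied to $|\sum_i X_i|\ge|X_1|-M$ on $\{X_2,\dots,X_n\in K\}$ already gives $\mathbb{E}|\widetilde D|=\infty$.
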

\begin{proof}[Proof sketch]
$\mathbb{E}|1/r_{\mathrm{ref}}|=\int f(x)/|x|\,dx$ diverges logarithmically at the origin whenever $f(0)>0$. Conditional on $r(\tau)$ having non-degenerate conditional mean near $r_{\mathrm{ref}}=0$, the same divergence carries to the ratio.
\end{proof}
The sample mean of $\widetilde D$ is finite for any finite $n$, so the pathology is silent: one obtains a number, it looks stable across reruns on the same data, and it does not converge to anything as $n$ grows. It is dominated by the items with the smallest $|r_{\mathrm{ref}}|$ --- which, in an event dataset, are the items where nothing happened. \S\ref{sec:ratio} measures the size of the resulting artefact.

\section{A dataset with recoverable phantoms}
\label{sec:data}

\paragraph{Price series.} E-mini S\&P~500 futures (front-month continuous, \texttt{ES.c.0}), best-bid-offer at 1-second resolution from a CME-derived vendor feed, in windows of $t_0\pm2$\,h. Total spend \$2.04.

\paragraph{Official index.} 555 announcement windows over 2010--2026: 212 CPI and 203 non-farm payroll releases (release dates from the Federal Reserve Bank of St.\ Louis release calendar; release times from the statutory 08:30 ET convention), and 140 FOMC statements (dates and per-statement release times verified against the Federal Reserve's own press-release pages; 14:15 ET before March 2013, 14:00 ET after, with seven unscheduled statements verified individually).

\paragraph{Detected index.} A magnitude-threshold scan of the same price series: a window is accepted when the 2-minute move exceeds ten times a per-second standard deviation estimated on the pre-window. Full specification in Appendix~\ref{app:legacy}. The scan was run separately on three candidate pools, which is what makes this dataset useful:
\begin{center}
\begin{tabular}{lll}
\toprule
Pool & Definition & $|C|$ \\
\midrule
FOMC & the Federal Reserve's published meeting calendar & 140 \\
NFP & the first Friday of each month & 200 \\
CPI & business days 6--16 of each month & 1569 \\
\bottomrule
\end{tabular}
\end{center}

\paragraph{Screens.} Applied identically to every label set before any comparison: a pre-window futures-roll screen (any 1-second log return above 20\,bp strictly \emph{before} $t_0$; post-$t_0$ jumps are deliberately excluded so the screen cannot delete announcements --- see App.~\ref{app:accidents}, Accident~4) and a degraded-quote screen. 415 official windows survive for the terminal-anchor analysis and 417 for the 30-minute-anchor analysis, all from $\sim$2013 onward; the vendor's 1-second book history does not extend reliably before then.

\paragraph{Why this is a good testbed and where it is limited.} The two indices are generated by causally independent processes: one by statistical agencies and a central bank, the other by a threshold on prices. Neither was constructed with the other in view. That independence is what makes phantom identification credible. The limitations are equally clear and we state them here rather than only in \S\ref{sec:limits}: one instrument, one detector family, one class of official index.

\section{Results}
\label{sec:results}

\subsection{One detector, three pools, three datasets}
\label{sec:pools}

\begin{table}[htbp]
\centering
\caption{The same detector deployed on three candidate pools. ``Phantom rate'' is $n_P/n_{\mathrm{legacy}}$, i.e.\ one minus the precision of the detector \emph{in that pool}. Counts are raw detections before quality screens; the screened counts used in \S\ref{sec:mixture} are $n_E=308$, $n_P=473$.}
\label{tab:overlap}
\begin{tabular}{lrrrrrrr}
\toprule
Pool & $|C|$ & $\pio$ & $n_{\mathrm{official}}$ & $n_{\mathrm{legacy}}$ & $n_E$ & $n_P$ & phantom rate \\
\midrule
FOMC & 140  & 1.000 & 140 & 126 & 126 & 0   & \textbf{0.0\%} \\
NFP  & 200  & 0.820 & 203 & 122 & 111 & 11  & \textbf{9.0\%} \\
CPI  & 1569 & 0.105 & 212 & 596 & 109 & 487 & \textbf{81.7\%} \\
\midrule
All  & 1909 & --- & 555 & 844 & 346 & 498 & 59.0\% \\
\bottomrule
\end{tabular}
\end{table}

Table~\ref{tab:overlap} is the paper's central measurement. Three observations.

\emph{The spread is the finding.} One detector, one price series, one implementation, three pools --- and three qualitatively different datasets. On the Fed's calendar the detector is perfect. On business days 6--16, four out of every five accepted items never happened.

\emph{Aggregation conceals it.} The pooled phantom rate is 59\%, which reads as a mediocre-but-usable detector. Per-pool, one of the three datasets is majority-synthetic. A practitioner who reports a single precision figure for a detector deployed across heterogeneous pools has reported a number that describes none of the deployments.

\emph{Phantoms dominate in count.} On the screened sample, $n_P=473$ against $n_E=308$: $c=0.606$. A majority of the published event list consists of events that did not occur.

\subsection{Cross-pool transfer: naive $+422\%$, Bayes $-3.3\%$}
\label{sec:transfer}

The three pools give a natural calibration/deployment split. Calibrate on the two high-prevalence pools (FOMC, $\pio=1.00$; NFP, $\pio=0.82$), deploy on the low-prevalence one (CPI, $\pio=0.105$).

A practitioner who treats precision as a detector property transfers the mean of the observed precisions, $\tfrac12(1.000+0.910)=0.955$. The measured CPI precision is $0.183$: an error of $\mathbf{+422\%}$. The Bayes expression \eqref{eq:bayes}, using each pool's own $\pio$, that pool's $\TPR$, and the detector's analytic $\FPR=0.361$ (derived in App.~\ref{app:accidents}, Accident~1, and confirmed by Monte Carlo to within 2\%), predicts $0.177$ against $0.183$ measured: an error of $\mathbf{-3.3\%}$. The corresponding errors are $-1.6\%$ for NFP and $0\%$ for FOMC. Figure~\ref{fig:precision} shows all three.

\begin{figure}[htbp]
\centering
\includegraphics[width=0.78\textwidth]{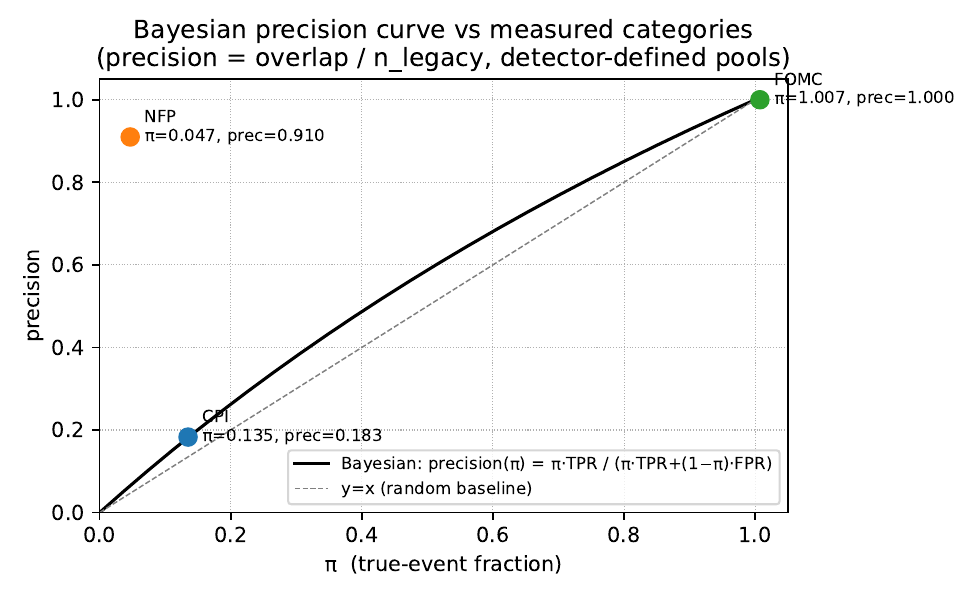}
\caption{Precision as a function of pool prevalence. Solid and dashed black: \eqref{eq:bayes} at the two measured $\TPR$ values, with $\FPR=0.361$ fixed. Dotted grey: the naive belief that precision is a property of the detector. Three measured deployments of one detector fall on the curve. The red marker is the naive transfer from the two high-$\pio$ pools to the low-$\pio$ pool; the dashed arrow is the resulting error.}
\label{fig:precision}
\end{figure}

We stress what is and is not being claimed. \eqref{eq:bayes} is elementary and we claim no novelty for it. The claim is that its consequences are large, are realised in a real dataset built by competent people, and are not detectable from inside that dataset. The naive transfer is not a straw man: it is what "we validated the detector and got 95\% precision'' means when the validation pool and the deployment pool differ in prevalence by an order of magnitude.

\subsection{Mixture decomposition: the contaminated curve, exactly}
\label{sec:mixture}

Applying \eqref{eq:mixture} to the screened sample:
\[
n_E=308,\quad n_P=473,\quad c=0.606,\quad w_E=6.38\,\mathrm{bp},\quad w_P=3.76\,\mathrm{bp},\quad \rho=0.590,
\]
and the identity check
\[
\max_\tau\big|D^{\mathrm{det}}_{\mathrm{pred}}(\tau)-D^{\mathrm{det}}_{\mathrm{obs}}(\tau)\big| \;=\; 1.1\times10^{-16},
\]
i.e.\ exact to machine precision (Figure~\ref{fig:mixture}, right). This is first a verification that the bookkeeping is sound, and second a statement with content: the published curve of the detected dataset contains exactly the information in \eqref{eq:mixture} and nothing else. There is no residual to interpret.

\begin{figure}[htbp]
\centering
\includegraphics[width=\textwidth]{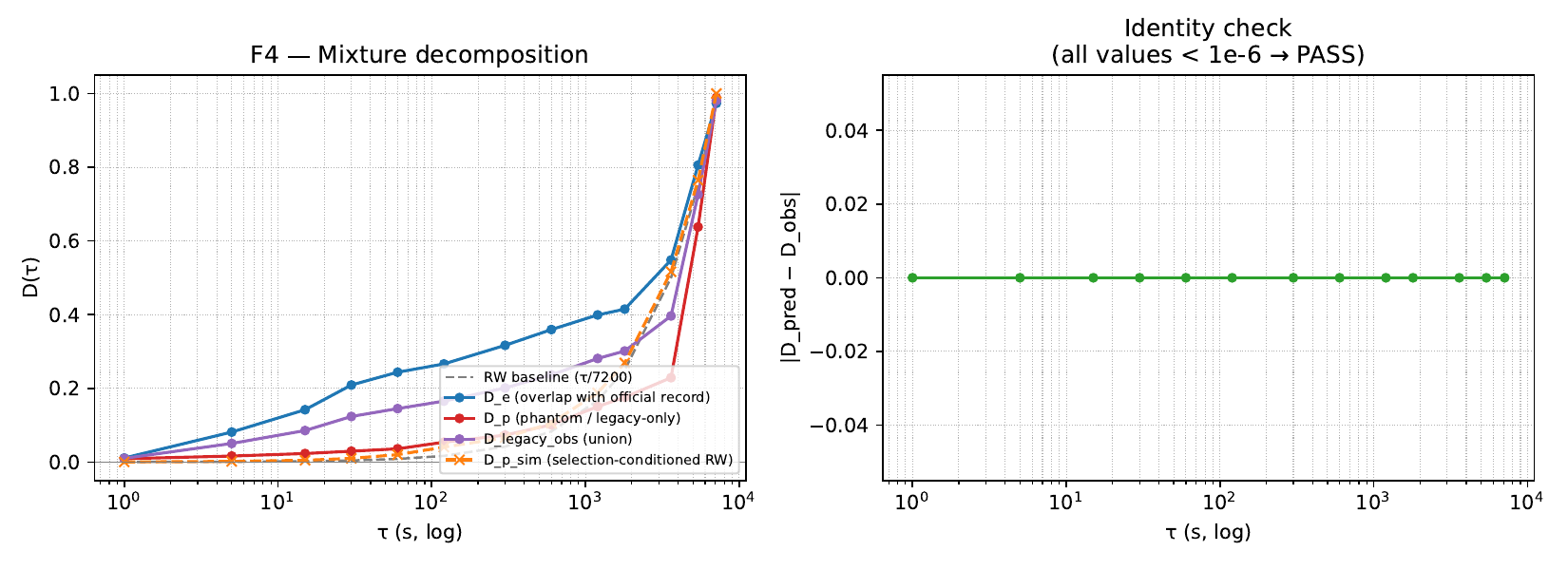}
\caption{\textbf{Left:} the detected dataset's curve decomposed into its true-event component $D_E$, its phantom component $D_P$, and the exact mixture, with the unconditional random-walk baseline and the selection-conditioned null for $D_P$. \textbf{Right:} identity \eqref{eq:mixture} verified to machine precision.}
\label{fig:mixture}
\end{figure}

\paragraph{What the phantom component is.} $D_P$ is not flat and not the unconditional random walk. It rises with $\tau$, qualitatively like $D_E$. Compared against the selection-conditioned null of \S\ref{sec:theory} --- pure noise passed through the detector's exact acceptance rule and the same estimator --- it is reproduced in shape but not in level (RMSE 0.092 over $\tau\le600$\,s, against $D_P$ values in the range 0.02--0.07; the null sits systematically low). We therefore state the weaker claim that the data support: the phantom component's \emph{shape} is generated by the acceptance rule rather than by event content, while its level is not fully accounted for by our null. We regard the residual as an open question rather than evidence of event content in the phantoms.

\paragraph{Why $\rho$ matters.} Here $\rho=0.59$: phantoms are quieter than true events, so their mixture weight is below their count share. A dataset with the same $c$ but $\rho>1$ would be substantially more damaged. Reporting precision alone ($=1-c$) does not distinguish these cases.

\subsection{The direction of contamination is a property of the estimator}
\label{sec:direction}

This is the finding we expect to generalise furthest beyond the domain.

Under the terminally normalised statistic used by the original pipeline, contamination \emph{inflates}: at $\tau=5$\,s, legacy $0.082$ versus official $0.068$; at $300$\,s, $0.317$ versus $0.271$. The contaminated dataset looks like it has \emph{more} structure than the truth. Under the denominator-free statistic $S(\tau)$ on the same windows, the ordering reverses: official $0.79$\,bp $>$ legacy $0.48$ $>$ phantom-only $0.18$ at $\tau=5$\,s, and $2.53>1.65>0.83$ at $300$\,s. Contamination \emph{dilutes}. Figure~\ref{fig:curves} shows both.

\begin{figure}[htbp]
\centering
\includegraphics[width=\textwidth]{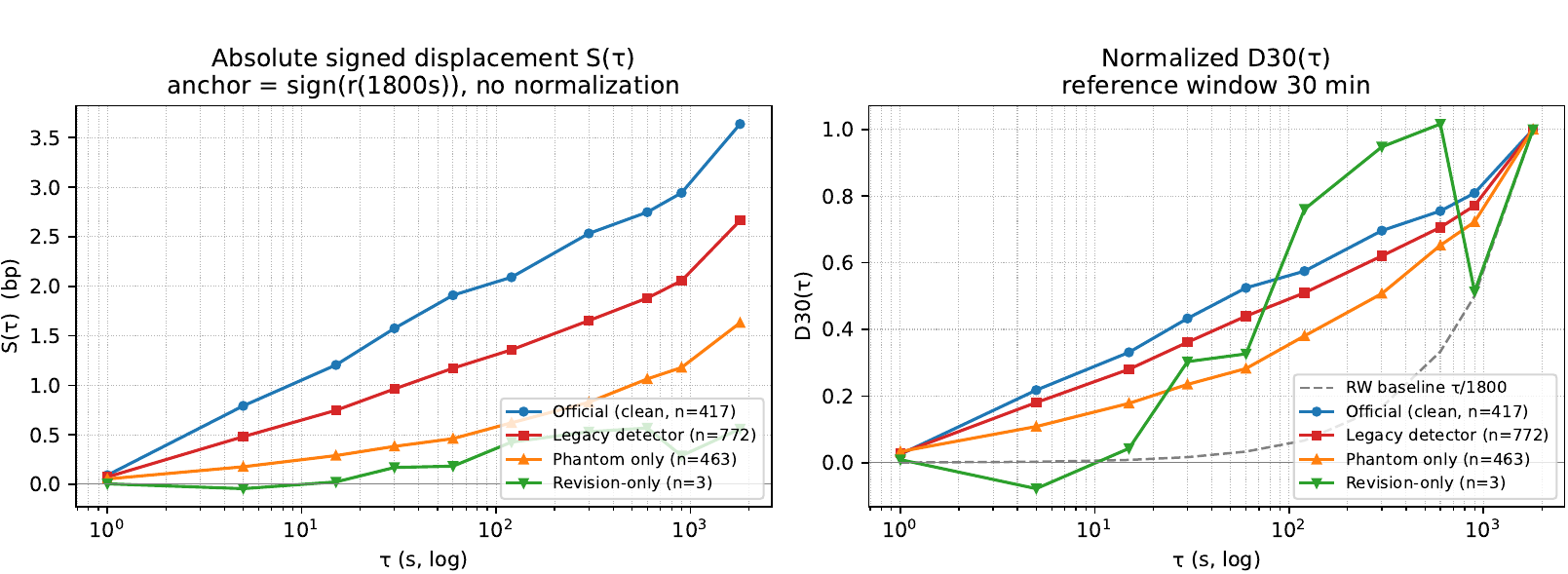}
\caption{Same price windows, two label sets, two estimators. \textbf{Left:} denominator-free signed displacement $S(\tau)$ in bp. \textbf{Right:} normalised $D_{30}(\tau)$ with its random-walk baseline $\tau/1800$. Official $n{=}417$, detected $n{=}772$, phantom-only $n{=}463$. \emph{The revision-only control is plotted at $n{=}3$ and is not interpretable}; only 3 of the 17 calendar-identified control windows have price coverage (App.~\ref{app:revision}). Under terminal normalisation (App.~\ref{app:anchor}) the official/detected ordering is reversed relative to the left panel.}
\label{fig:curves}
\end{figure}

The mechanism is \eqref{eq:mixture} together with the denominator. $S$ has no denominator, so the mixture simply pulls the true curve toward the phantom curve: dilution, as intuition expects. $D^{\mathrm{term}}$ divides by a scale estimated on the same pooled windows. Phantoms enter both numerator and denominator, and because $\rho<1$ they depress the denominator proportionally more than the numerator at short horizons, inflating the ratio. The estimator's own scale is contaminated.

The consequence for practice is sharper than the mechanism. A user holding only a contaminated dataset cannot determine which regime they are in, because the diagnosis requires $\rho$, which requires knowing which items are phantoms. Statements of the form "contamination will attenuate my effect, so my estimate is conservative'' --- a very common defence in applied work --- are not available. Attenuation is one of two possible directions and which one occurs depends on a quantity the analyst cannot measure.

\subsection{A normalisation that returns 0.40 where the estimator returns 0.10}
\label{sec:ratio}

The pathology of \S\ref{sec:ratio-theory} is not hypothetical; we found it in our own pipeline, and it accounts for a factor-of-four discrepancy that took two rounds to resolve. On an identical set of 335 events, the two aggregations of \eqref{eq:stats} give
\[
\widetilde D(5\mathrm{s}) = 0.3997 \qquad\text{versus}\qquad D(5\mathrm{s}) = 0.1020 .
\]
Splitting by the magnitude of the reference return makes the source visible: on quiet windows ($|r_{\mathrm{ref}}|$ below the sample median) $\widetilde D(5\mathrm{s})=0.415$; on active windows, $0.097$. The quiet windows, where by construction nothing happened and the post-event path is uncorrelated with the anchor sign, dominate the average because they carry the largest $1/|r_{\mathrm{ref}}|$ weights.

Per-item normalisation is an appealing move --- it is how one makes heterogeneous events comparable --- and its failure is silent, because the estimator returns a plausible number that is stable across reruns. We report it because we suspect it is not rare: any statistic of the form "mean of per-item normalised response'' with a signed, mean-zero denominator has this property.

\subsection{Isolating the statistical channel}
\label{sec:sim}

Finally we separate the mechanism from everything domain-specific. $M=5000$ candidate windows per configuration, 10 seeds; a fraction $\pio$ carry a delayed smooth jump ($\lambda\in\{2,10,60\}$\,s, amplitude $U(3,10)\times\sigma_{2\mathrm{m}}$), the rest are pure random walk. Two detectors at \emph{matched nominal strictness}: detector~A thresholds $|\Delta p_{2\mathrm{m}}|/\sigma_{1\mathrm{s}}$ at 10; detector~B thresholds $|\Delta p_{2\mathrm{m}}|/(\sigma_{1\mathrm{s}}\sqrt{120})$ at 3. Both read as strict; their actual false-positive rates differ by two orders of magnitude (0.361 versus 0.0027). A self-check at $\pio=1$ confirms the harness recovers the injected response (relative bias 0.039).

\begin{figure}[htbp]
\centering
\includegraphics[width=\textwidth]{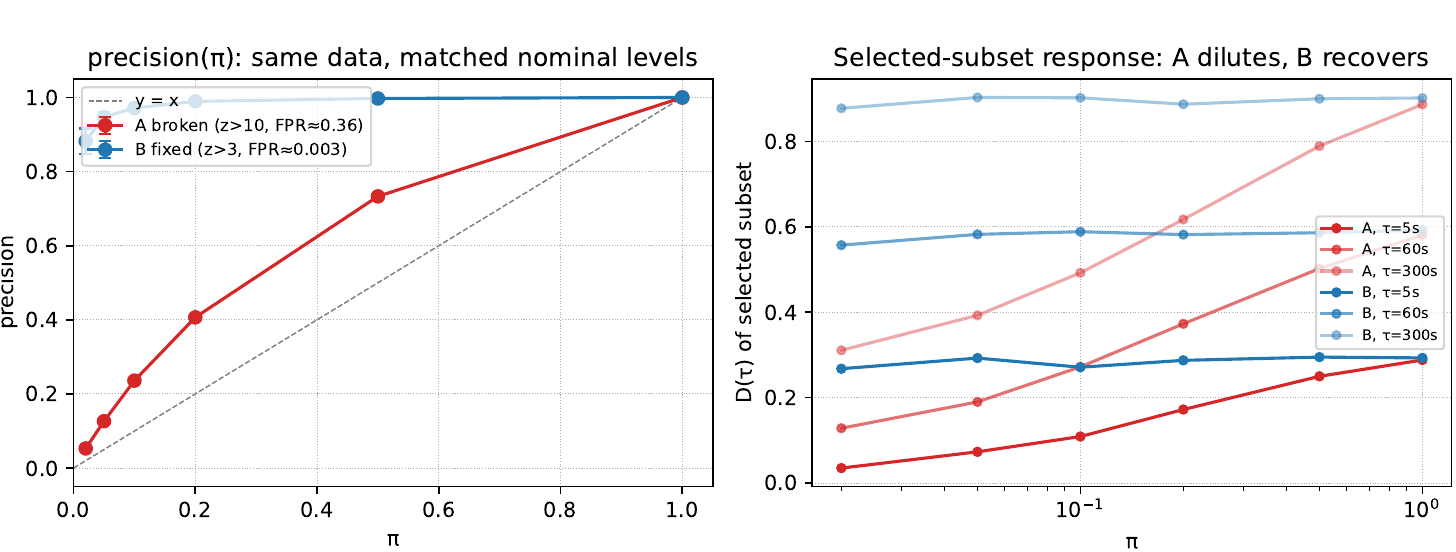}
\caption{Two detectors at matched \emph{nominal} strictness on identical data. \textbf{Left:} at $\pio=0.02$, A attains precision 0.053 while B attains 0.882. \textbf{Right:} the response curve measured on each detector's selected subset, as a function of $\pio$: A's estimate drifts with the composition of its own selection; B's is stable.}
\label{fig:simulator}
\end{figure}

At $\pio=0.02$ the two equally strict detectors yield precisions of 0.053 and 0.882 on the same data. Nothing about the underlying events changed; only a time-scale normalisation inside the acceptance statistic did. This is the minimal mechanism behind Table~\ref{tab:overlap}: a detector-defined dataset is the detector's normalisation choices stamped onto data, and those choices are not recoverable from the output.

\subsection{Scope, and a pre-registered claim we did not get to make}
\label{sec:scope}

Before running the analysis we registered a threshold for an absolute claim: if the clean official index gave $D(5\mathrm{s})\ge0.30$ we would claim a fast price-discovery effect, and at $\le0.15$ we would not. The measured value on 415 clean official windows is $\mathbf{0.068}$. We therefore make no such claim. We report this because the pre-registration is what makes the rest of the paper's numbers credible, and because the temptation to relax the threshold after seeing 0.068 was real and is exactly the behaviour the paper argues against.

\section{Related work}
\label{sec:related}

\paragraph{Base rates and precision under imbalance.} That posterior probability depends on prevalence is the base-rate fallacy \citep{barhillel1980}; its consequences for classifier evaluation are treated by \citet{hand2009}, and \citet{saito2015} show specifically that precision-based summaries are prevalence-dependent in a way that ROC summaries are not. Our contribution is not the mechanism but an end-to-end measurement of it in a deployed dataset with recoverable ground truth, plus the mixture and estimator-dependence results that follow.

\paragraph{Weak, distant, and self-supervision.} Distant supervision \citep{mintz2009} and programmatic weak supervision \citep{ratner2017} construct labels by rule application; the label-model literature estimates and corrects labelling-function accuracies, but typically assumes the accuracies are pool-stable, which is precisely the assumption Table~\ref{tab:overlap} violates. Pseudo-labelling suffers a related confirmation bias \citep{arazo2020}. Positive-unlabelled learning \citep{elkan2008} formalises learning when negatives are unobserved and makes prevalence explicit; our setting is its evaluation-side dual.

\paragraph{Label noise.} A large literature studies learning under label noise \citep{natarajan2013,frenay2014}. That work almost always models noise as a stochastic corruption applied to correct labels. The contamination here is not of that form: phantoms are generated by the same rule that generates true positives, so the corruption is systematically correlated with the features and carries its own signal shape (\S\ref{sec:mixture}). Noise-robust methods that assume class-conditional random flips do not address it.

\paragraph{Dataset quality and documentation.} \citet{northcutt2021} document pervasive label errors in benchmark test sets, and \citet{gebru2021} propose documentation standards. Our protocol (\S\ref{sec:protocol}) is in this tradition and specialises it to the detector-defined case, where the relevant fields are $(\pio,\TPR,\FPR,c,\rho)$ rather than provenance narrative.

\paragraph{Event studies.} The applied setting is the event study \citep{mackinlay1997}; the specific phenomenon our detector was built to find is announcement response \citep{andersen2003}. We use this literature only as the source of a realistic detector and a realistic downstream estimand.

\section{A reporting protocol for detector-defined datasets}
\label{sec:protocol}

\begin{enumerate}\itemsep1pt
\item \textbf{Observed indices first.} Use a published index wherever one exists; detection is a last resort, not a convenience.
\item \textbf{Validate in the deployment pool.} Precision must be measured, or derived via \eqref{eq:bayes}, on the pool where the detector is actually run. Cross-prevalence transfer of precision is not permitted. High-prevalence validation is near-zero evidence about $\FPR$.
\item \textbf{Report five quantities.} $\pio$, $\TPR$, $\FPR$, $c$, $\rho$. Precision alone conceals which one moved, and $\rho$ determines whether a given $c$ is tolerable.
\item \textbf{Declare the time scale of every $z$-like statistic.} Any ratio of a scale-$t$ quantity to a per-unit scale must be $\sqrt{t}$-normalised explicitly, and the reference window of every normalisation stated.
\item \textbf{Ship an analytic null with every estimator.} A closed-form null for the acceptance rule is what turns "our detector fires often'' into a measured $\FPR$. When the estimator is applied to a selected subset, the null must be the \emph{selection-conditioned} null, not the unconditional one.
\item \textbf{Avoid per-item normalisation by a signed, mean-zero denominator.} Such estimators have no finite expectation (\S\ref{sec:ratio-theory}); use a pooled denominator or none.
\item \textbf{Run an oracle power check before comparing methods.} Compute the bound achieved by a method given perfect ground truth. If it is within seed noise of the baseline, the configuration cannot distinguish methods, and this is a property of the design rather than of the methods (App.~\ref{app:oracle}).
\end{enumerate}

\section{Limitations}
\label{sec:limits}

\begin{enumerate}\itemsep1pt
\item \textbf{One detector family.} All measurements concern magnitude-threshold detectors. Equations \eqref{eq:bayes} and \eqref{eq:mixture} hold for any detector; our numbers do not transfer to change-point, likelihood-ratio, or learned detectors.
\item \textbf{One instrument, one asset class, one index family.} Prevalences and phantom rates are pool properties. The mechanism is general; the magnitudes are not.
\item \textbf{Sample coverage.} 415 clean official windows, effectively 2013 onward; 68 pre-2013 windows returned no data at the vendor's 1-second resolution. Cross-decade comparability is limited.
\item \textbf{The selection-conditioned null is incomplete.} It reproduces the shape of $D_P$ but not its level (\S\ref{sec:mixture}).
\item \textbf{Relative claims only.} Having abandoned the terminally normalised statistic for headline use, we support relative conclusions between datasets and estimators, not absolute statements about response speed.
\end{enumerate}

\section*{Reproducibility statement}
All event indices, phantom labels, screens, curves, and simulator code are released, together with a per-number traceability map, the full decision log for every pre-registered branch point, and an accident log (App.~\ref{app:accidents}). The price data cannot be redistributed under the vendor's licence; we release the exact request manifests (symbol, schema, window, checksum) so that an identical dataset can be reconstructed.

\appendix

\section{Six accidents in this paper's own pipeline}
\label{app:accidents}

Each accident below occurred during construction of this paper's dataset, was resolved by a pre-registered action, and is documented in full in the released log. We present them as worked instances of the paper's mechanism. Each entry gives the error, how it was found, the blast radius, and the fix.

\paragraph{Accident 1: the unscaled $z$-score.} The detector's statistic divides a 2-minute move by a \emph{per-second} standard deviation, omitting the $\sqrt{120}$ scaling. Under a random-walk null this gives $\FPR = 2\big(1-\Phi(10/\sqrt{120})\big)=0.361$: the threshold fires on roughly one in three ordinary windows. Consequence: 487 phantom CPI detections. Table~\ref{tab:overlap}'s 81.7\% phantom rate \emph{is this bug, measured}. Found by deriving the closed-form null and checking it against simulation (Rule~5), neither of which the original pipeline had. The bug is retained in \S\ref{sec:sim} as detector~A because it is realistic.

\paragraph{Accident 2: the hallucinated metadata correction.} An automated pipeline step "corrected'' the FOMC statement release time from 14:00\,ET to 13:30\,ET for all statements after a cutoff, citing a secondary report of a central-bank announcement that, on inspection of the institution's own press-release pages, does not exist. Both primary sources checked state release at 14:00\,ET. Blast radius: 18 ground-truth timestamps shifted by 30 minutes, and one full response-curve computation silently rerun on the corrupted index before the error was caught. Fix: rollback; a hard rule that no metadata may be modified without a primary-source URL recorded in the manifest; per-statement verification against primary pages. This is the paper's thesis in miniature --- a confident pipeline, an unmeasured false-positive rate on the task of "correcting metadata'', and a ground truth that then validated itself.

\paragraph{Accident 3: positional indexing on a series with gaps.} The first response-curve implementation located $t_0$ by row offset, assuming one row per second. The vendor emits a row only on quote update, so the offset drifted by up to 48 minutes in sparse periods and the signal averaged away. Fix: timestamp-based alignment throughout.

\paragraph{Accident 4: a cleaning filter that deleted the signal.} A futures-roll filter flagged any 1-second return above 20\,bp anywhere in the window. Of 86 official windows so flagged, 68 had their largest jump within $\pm5$\,s of $t_0$: the filter was removing announcement reactions as artefacts. Fix: the roll screen is evaluated strictly on the pre-window, which cannot contain the reaction. \emph{Lesson, and a general one:} any cleaning rule whose support intersects the event window can become selective deletion of the target signal --- structurally identical to the contamination this paper studies, with the sign flipped.

\paragraph{Accident 5: the same indexing bug, in a different file.} After Accident~3 was fixed in the estimator, the identical row-offset pattern reappeared in the screening code's pre-window slice. In high-activity sessions the "pre-window'' slice crossed $t_0$ and the largest announcement reactions were deleted as rolls --- for instance the November~2022 CPI release, whose true pre-window maximum was 3.3\,bp and whose post-event move was 126\,bp. The months with the strongest signal were deleted at the highest rate. Fix: timestamp masks, a repository-wide audit for row-versus-time indexing, and a regression test. Clean official windows went from 343 to 415.

\paragraph{Accident 6: a factor-of-four estimator artefact.} Two implementations of the response curve disagreed by $4\times$ on identical events. The cause was the per-item normalisation of \S\ref{sec:ratio}. Resolution is reported in the main text rather than only here because we consider the estimator pathology a finding rather than a mishap.

\section{Oracle power checks and a retracted experiment}
\label{app:oracle}
For any timing-sensitive evaluation, the \emph{oracle} is the same downstream method given the true timestamps. The oracle-minus-baseline gap upper-bounds the total effect any alignment or denoising method can produce in that configuration. An earlier version of this project evaluated an alignment module in a configuration whose oracle gap was 0.024 CRPS against a comparable seed-noise floor: no method comparison there could have detected anything, regardless of method quality. The experiment was retracted before producing conclusions. Rule~7 operationalises the check as a pre-registration step. The retracted results are released for completeness and should not be cited.

\section{The revision-only negative control}
\label{app:revision}
Same-month double releases --- where a scheduled release carries only a revision to previously published figures and no new reference-period data --- provide events with the exact metadata structure of real releases but, by construction, no new information. A calendar rule identifies 17 such windows (13 CPI, 4 payrolls). Only 3 have usable price coverage in our sample. We report this because the control was pre-registered, and we do \emph{not} interpret it: $n=3$ is insufficient, and the curve shown in Figure~\ref{fig:curves} is included only for completeness. Constructing this control properly requires price coverage we could not obtain within the project's data budget, and we flag it as the single cheapest improvement available to a follow-up.

\section{The terminally normalised statistic}
\label{app:anchor}
For completeness we give the legacy 2-hour-anchor convention on the screened sample. At $\tau=5$\,s: official 0.068, detected 0.082; at $300$\,s: 0.271 and 0.317; random-walk baseline $\tau/7200$. This is the statistic underlying the original dataset's published curve. Its ordering (detected above official at every horizon) reverses under $S(\tau)$, which is the demonstration of \S\ref{sec:direction}. Per-category values at $\tau\in\{5,30,300,3600\}$\,s are released in \texttt{f2\_dtau\_by\_subcat.csv}.

\section{Detector specification and screens}
\label{app:legacy}
The detector scans the front-month continuous ES series at 1-second best-bid-offer resolution and accepts a window when $|\Delta p_{2\mathrm{m}}|$ exceeds ten per-second standard deviations estimated on the 5 minutes preceding $t_0$ (Accident~1's unscaled form). Windows are $t_0\pm2$\,h on bid-ask mid. Screens: (i) pre-window roll screen, any 1-second log return above 20\,bp strictly before $t_0-1$\,s, evaluated on timestamp masks; (ii) degraded-quote screen, windows with more than 5\% missing bid or ask. Screens are applied identically to every label set before any comparison.

\end{document}